\newtheorem{theorem}{Theorem}
\newtheorem{corollary}{Corollary}[theorem]
\definecolor{ForestGreen}{RGB}{46,111,64}
\ttfamily\color{BrickRed},
\newcommand{\reffig}[1]{Fig.~\ref{#1}}
\newcommand{\reftab}[1]{Table~\ref{#1}}
\newcommand{\refsec}[1]{Section~\ref{#1}}
\newcommand{\refeq}[1]{Eq.~\ref{#1}}
\newcommand{\refcite}[1]{Ref.~\cite{#1}}
\newcommand{\loss}{\rho}
\newcommand{\Loss}{\text{C}}
\newcommand{\dTheta}{\Theta}
\newcommand{\dtheta}{\theta}
\newcommand{\dti}[1]{\theta_{#1}}
\newcommand{\dtit}[2]{\theta_{#1}(#2)}
\newcommand{\dtitn}[3]{\theta_{#1}^{#3}(#2)}
\newcommand{\dtin}[2]{\theta_{#1}^{#2}}
\newcommand{\gradC}{\mathbf{\nabla_\Theta} \Loss}
\newcommand{\gradCW}{\mathbf{\nabla_\W} \Loss}
\newcommand{\dC}{\Delta \Loss}
\newcommand{\dCi}[1]{\frac{\partial \Loss}{\partial \theta_{#1}}}
\newcommand{\dCwi}[1]{\frac{\partial \Loss}{\partial w_{#1}}}
\newcommand{\E}[1]{\mathrm{E}\Big(#1\Big)}
\newcommand{\cov}{\mathrm{Cov}}
\newcommand{\xhat}[0]{\ensuremath{\hat{x}}\xspace}
\newcommand{\yhat}[0]{\ensuremath{\hat{y}}\xspace}
\newcommand{\tautheta}{\ensuremath{\tau_\uptheta}\xspace}
\newcommand{\W}{{\bf W}}
\newcommand{\bb}{{\bf b}}
\newcommand{\xx}{{\bf x}}
\newcommand{\yy}{{\bf y}}
\newcommand{\mm}{{\bf m}}
\newcommand{\vv}{{\bf v}}
\newcommand{\yyHat}{\hat{\yy}}
\newcommand{\Zero}{{{\bf 0}}}
\newcommand{\Gmgd}{{\bf G}\xspace}
\newcommand{\Gtrue}{\ensuremath{\gradCW}\xspace}
\newcommand{\LL}{\boldsymbol{\Lambda}}
\newcommand{\s}[1]{\sigma_{#1}}
\renewcommand{\a}[1]{\alpha_{#1}}
\newcommand{\nP}{\ensuremath{K}\xspace}  
\newcommand{\nW}{\ensuremath{N}\xspace}  
\newcommand{\nT}{\ensuremath{\tau_\theta}\xspace} 
\begin{document}

\doublespacing
\pagenumbering{arabic} 


\title{Scaling of hardware-compatible perturbative training algorithms}

\author{
B. G. Oripov$^{1,2}$,
A. Dienstfrey$^{1}$,\\
A. N. McCaughan$^1$,
S. M. Buckley$^{1,*}$
}

\date{
     \small
     $^1$National Institute of Standards and Technology, Boulder, Colorado 80305, USA\\%
     $^2$Department of Physics, University of Colorado, Boulder, Colorado 80309, USA\\%
     $^*$Correspondence: sonia.buckley@nist.gov
 }
\maketitle

\section*{Abstract}
In this work, we explore the capabilities of multiplexed gradient descent (MGD), a scalable and efficient perturbative zeroth-order training method for estimating the gradient of a loss function in hardware and training it via stochastic gradient descent. We extend the framework to include both weight and node perturbation, and discuss the advantages and disadvantages of each approach. We investigate the time to train networks using MGD as a function of network size and task complexity. Previous research has suggested that perturbative training methods do not scale well to large problems, since in these methods the time to estimate the gradient scales linearly with the number of network parameters.  However, in this work we show that the time to reach a target accuracy--that is, actually solve the problem of interest--does not follow this undesirable linear scaling, and in fact often decreases with network size. Furthermore, we demonstrate that MGD can be used to calculate a drop-in replacement for the gradient in stochastic gradient descent, and therefore optimization accelerators such as momentum can be used alongside MGD, ensuring compatibility with existing machine learning practices. Our results indicate that MGD can efficiently train large networks on hardware, achieving accuracy comparable to backpropagation, thus presenting a practical solution for future neuromorphic computing systems. 

\section{Introduction}

Machine learning (ML) algorithms are fundamentally altering our interactions with technology, driven predominantly by artificial neural networks. However, the significant costs associated with training and deploying these algorithms—largely stemming from energy expenditures—pose a substantial barrier to their scalability and broader adoption. Industry leaders have voiced concerns over the energy demands of conventional complementary metal-oxide semiconductor (CMOS) hardware used in ML, advocating for substantial investments in innovative hardware solutions \cite{wsj_article}.
In contrast to this, the human brain achieves similar computational feats at a fraction of the energy cost, suggesting that brain-inspired hardware represents a promising direction. In this context, analog neuromorphic hardware offers a promising solution to the energy challenges being faced by current hardware technologies. However, training on analog hardware has proved more difficult than on its digital counterparts.
In this article, we demonstrate the effectiveness of multiplexed gradient descent (MGD), a general perturbative training framework, in matching the accuracy of the backpropagation algorithm in identical network architectures, even for large networks ($>10^6$ parameters). We also address common misconceptions regarding perturbative training methods and show that they can indeed scale, contrary to prevailing sentiment in the field \cite{hinton2022forwardforward, Lillicrap2020}.

Designing dedicated neuromorphic hardware introduces several challenges, particularly when considering the methods for training such systems. Training a ML model amounts to minimizing a specified loss function. In ML contexts this minimization proceeds by gradient descent and its various extensions. In nearly all standard computing frameworks, this gradient is computed by reverse-mode automatic differentiation and is referred to as backpropagation. While this algorithm is efficient in software, implementing backpropagation in hardware poses significant difficulties including the requirement that the in-hardware computational path can be reversed, substantial memory at each neuron, and the necessity of computing the derivative of the activation function. Due to these challenges, implementation of backpropagation in analog hardware to date \cite{Li2018, Pai2023, Nandakumar2020} has typically involved a computer in the loop to implement part of the computation, or been limited to relatively small networks \cite{Vandoremaele2024}. To avoid the difficulties of a full hardware implementation of backpropagation, the field has explored several alternative training approaches.

One approach is to conduct training in a traditional computer using a model of the hardware, transfer the resulting weights into devices, and restrict the in-hardware computation to inference tasks only. This solution can reduce deployment costs in many cases, as a single training simulation can determine parameters for many inference instantiations. However, discrepancies between the simulated hardware model and the actual hardware can result in diminished accuracy. Developing training algorithms that produce networks that are more robust to device-to-device variations is a significant line of research \cite{Rasch2023}. Additionally, this approach is less viable for situations that require in-situ adaptability. Various "computer-in-the-loop" strategies \cite{Buckley2023} can help with these issues, for example by implementing the forward pass in hardware and calculating the individual weight updates via simulation \cite{Wright2022}.

Another strategy employs Hebbian learning, a simpler learning paradigm based on the empirical observation that the connections between neurons strengthen when they fire synchronously. Although straightforward to implement in hardware \cite{Bichler2012, Friedmann2017}, Hebbian learning lacks the general applicability and mathematical rigor of gradient-based methods and does not guarantee convergence to a solution. Recent insights suggest that the brain's learning mechanisms might involve more complex three-factor rules \cite{Gerstner2018}, indicating that Hebbian learning might not fully capture the neural learning processes.

In contrast, perturbative methods offer a model-free \cite{Dembo1990} stochastic gradient-descent approach, treating the hardware as a black box, and applying small perturbations to estimate the gradient and therefore minimize network cost using the same optimization algorithm as traditional ML. This approach does not rely on a model of the network's operation, allowing it to be applied across various hardware platforms. Despite early interest \cite{Dembo1990, Matsumoto1990, Cauwenberghs1992, Flower1992,  Alspector1992, Kirk1992, Maeda1995, Cauwenberghs1996,Moerland1996, Montalvo1997,Miyao1997, Draghici2000} and ease of implementation, perturbative techniques fell out of favor due to the poor scaling of time to estimate the gradient. Critically, gradient estimation time was assumed to be a good proxy for training time \cite{Lillicrap2020, hinton2022forwardforward, Cauwenberghs1994}. In this paper we test this assumption and find that the connection between gradient and training accuracy is much more nuanced. Our findings are bolstered by a number recent papers that have also found perturbative techniques to be more effective than assumed \cite{mgd_paper, Dalm2023, ren2023scaling, Zhao2023} and are likewise convenient to implement in hardware \cite{Bandyopadhyay2022}.

Multiplexed gradient descent (MGD) is a model-free gradient descent framework that attempts to generalize earlier perturbative approaches\cite{Dembo1990}\cite{Spall1992} in a hardware-friendly way by defining a set of three time constants for the perturbation process that correspond to the time between weight updates, time between sample updates, and time between perturbation updates.  By varying these time constants in a given hardware system, a wide variety of numerical gradient descent techniques (e.g. coordinate descent, SPSA, etc) can be achieved. In previous work \cite{mgd_paper}, we introduced MGD and examined the speed of gradient estimation and training time of different perturbative techniques given particular hardware parameters. For modest-sized networks and limited architectures, the results indicated that in-hardware training with MGD could be performed faster than backpropagation on a standard GPU. 

In this study, we extend the application of MGD to train larger and deep feedforward neural networks. We show that perturbative techniques can match the accuracy of backpropagation for networks of up to one million parameters. We evaluate the scaling of both the time to estimate the gradient and the time to train to a given accuracy as a function of network size. As expected, we find that the speed of gradient convergence can be estimated analytically as a function of network size. However, the time to attain a converged approximation of the gradient is not representative of the time to train a network. This result agrees with previous work showing that an accurate gradient estimate is not necessarily required for online gradient descent  \cite{Lillicrap2016}. Furthermore, in this study we perform a more systematic analysis of different perturbative approaches--- weight perturbation \cite{Dembo1990} and node perturbation \cite{Flower1992}--- and discuss the tradeoffs associated with their implementations. For example, while the time to estimate the gradient does scale better for dense networks with node perturbation compared to weight perturbation, this is not true for arbitrary architectures \cite{Zuge2023}--as one example, convolutional networks may have more nodes than weights. Finally, we show that standard optimization algorithms such as momentum and the Adam optimizer can be implemented in the MGD framework. This is important, as in practice most ML implementations rely on more than vanilla backpropagation, and therefore it is likely that more advanced optimization techniques will be required in the MGD framework as well. This is supported by recent results \cite{ren2023scaling}. Together, these results suggest that MGD represents a promising class of algorithms for training new emerging neuromorphic hardware. 

\section{Multiplexed gradient descent}
\label{sec:mgd}
Multiplexed gradient descent provides an approach for model-free, {\em in situ} training of hardware-based implementations of neural networks. In this section we expand upon the ideas originally presented in \refcite{mgd_paper}. We first generalize the connection between perturbation parameters and neural network weights. In \cite{mgd_paper} these two variable classes were the same--that is, every individual weight had its own perturbation--resulting in what is typically called the ``weight perturbation'' model. This model was investigated exclusively in \refcite{mgd_paper}. By relieving this constraint, we are able to implement a ``node perturbation'' model.  Below, we define node perturbation and study its impact on training efficiency. Other perturbation models are also possible although not investigated here. We next recast the MGD algorithm in a more formal framework. As a result we prove a theorem stating that, for a linear cost function or, equivalently, for sufficiently small perturbation magnitudes, the principal random variable defined in the MGD formalism is an unbiased estimator of the true parameter gradient. We also compute its second order statistics. Finally, we introduce the distinction between gradient convergence and network accuracy. Whereas the former has generally been the focal point for prior studies in perturbative training of neural networks \cite{Cauwenberghs1994, Lillicrap2020}, our numerical studies in the following sections call into question whether this attention is fully warranted. 

\subsection{Weight and Node Perturbation}
To present the MGD idea in its most general form, we distinguish two classes of parameters. At the hardware level one considers the set of {\em perturbation parameters}
\[
\Theta=(\theta_1,\theta_2,\ldots,\theta_\nP).
\]
These are the physical quantities---for example, voltages, currents, conductances, optical power---that can be modified most readily by the hardware engineer.  The precise definition depends on the physics and interface to the hardware-based model implementing the neural network, and can change from one design to the next. For the neural network, one considers the usual mathematical model of a parameterized non-linear function. The parameters are referred to as ``weights'' and ``biases'' which we combine into a large vector  which we will simply refer to as the weights,
\[
\W=(w_1,\ldots,w_\nW).
\]

The hardware-based implementation of the neural network takes input $\xx$ to output $\yy$. The map depends on both vectors
\[
\yy = f(\xx;\W,\Theta)
\] 
We consider a supervised learning problem in which we are provided with labeled training data $\{(\xx_1,\yy_1),\ldots,(\xx_T,\yy_T)\}$. For a given input $\xx_t$, we use $\yyHat_t=f(\xx_t;\W;\Theta)$ to denote the associated inference. We assume a hardware implementation of a differentiable residual function that evaluates the quality of the network inference in comparison to the target, $\loss(\yyHat_t,\yy_t)$. We also assume a hardware-based accumulator that computes the empirical loss as the mean of the residual over some collection of training data \footnote{Note, whereas details of normalization factors are important to make mathematical derivations precise, in practice there is often flexibility that can be accommodated by a well-placed scale factor. For example, loss function scaling can be passed into gradient scaling both of which, operationally, can be absorbed into the setting of the learning rate.}
\begin{equation}
\label{def:Loss}
    \Loss(\W,\Theta) := \frac{1}{T}\sum_{t=1}^T\loss(f(\xx_t;\W,\Theta),\yy_t)
\end{equation}
Here the sum could be over the full set of training data, or some minibatch. The training problem consists of minimizing the loss as a function of $\W$. Within the ML community variations of gradient descent are nearly universally used for this task. In all cases one requires an estimate of the gradient
\begin{equation}
\label{def:gradW}
    \gradCW := \left(\dCwi{1},\dCwi{2},\ldots,\dCwi{\nW}\right).
\end{equation}
In computational networks backpropagation is generally used to evaluate the above gradient.

The analytic details of backpropagation do not readily lend themselves to hardware based implementation. By contrast, the premise of hardware-based ML entails that there are parameters whose variation have measurable impact on network performance. MGD amounts to a strategy for organizing these perturbative degrees of freedom so as to estimate
\begin{equation}
\label{def:gradC}
\gradC := \left(\dCi{1},\dCi{2},\ldots,\dCi{\nP}\right).
\end{equation}
The central assumption is that in a neighborhood of a fixed parameter vector, $\|\W-\W_0\|<\delta$, variation in $f(\xx;\W,\Zero)$ as a function of $\W$ can be inferred from variation in $f(\xx;\W_0,\Theta)$ as a function of $\Theta$. More specifically, we assume a relationship between gradients
\[
\gradC \leftrightarrow \gradCW.
\]
This connection is used to estimate the gradient of a cost function with respect to the weights after which training proceeds by gradient descent and its variants. 

In the simplest setting, the weights themselves are directly perturbable,
\begin{equation}
\label{def:weightPerturbation}
    \W = \W_0+\Theta.
\end{equation}
In this case, the two gradients \eqref{def:gradW} and \eqref{def:gradC} are one and the same. This is referred to as {\em weight perturbation.} A different strategy relies on the typical construction of neural network functions as compositions of an affine transformation followed by a non-linear activation function. In this case one can consider a perturbative degree of freedom added to the output of the affine transformation\footnote{The non-linear activation function is assumed to act componentwise on each of its inputs.}
\begin{equation}
\label{def:nodePerturbation}
\xx^{(\ell+1)} 
    = \sigma\big(\W^{(\ell)}\xx^{(\ell)}+\bb^{(\ell)} + \Theta\big),\text{ for }\Theta = {\bf 0}.
\end{equation}
For any fixed training instance we abbreviate the loss as $\loss_t=\loss(f(\xx_t;\W,\Theta),\yy_t)$. The chain rule then implies that
\begin{subequations}
\label{test}
\begin{align}
\frac{\partial \loss_t}{\partial b_k} &= \frac{\partial \loss_t}{\partial \theta_k} \\
\label{eqn:nodeMultiply}
\frac{\partial \loss_t}{\partial w_{k,j}} &= \frac{\partial \loss_t}{\partial \theta_k}x^{(\ell)}_j.
\end{align}
\end{subequations}
Summing over all training instances in \eqref{def:Loss} gives the gradient of the cost. As the perturbation parameter in \eqref{def:nodePerturbation} varies the input into an activation function and the latter are depicted as nodes in the computational graph, this is referred to as {\em node perturbation}. Note that a fully hardware-based implementation of node perturbation requires a multiplication circuit to infer $\gradCW$ from $\gradC$ as shown in Eq. \eqref{eqn:nodeMultiply}.

\begin{figure}
    \centering
    \includegraphics[width=4.5in]{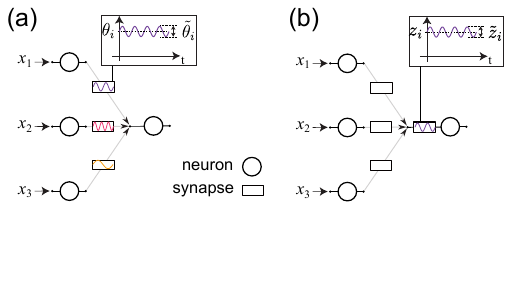}
    \caption{(a) Illustration of weight perturbation, where each weight is perturbed individually. Weights can be updated entirely within the synapse by combining the global cost-feedback signal with the local perturbation.  (b) Illustration of node perturbation, where only the summed input to each neuron is perturbed.  Here, weight updates must be performed through a one-step backpropagation process that computes the error at the neuron input, and then passes it backwards (through a multiplication) to the synapse.}
    \label{fig2-WvsN}
\end{figure}

These two perturbation strategies are represented graphically in \reffig{fig2-WvsN}. Figure~\ref{fig2-WvsN}(a) depicts the weight perturbation algorithm, where individual weights are perturbed and then correlated with the corresponding change in the cost. As we will see below, perturbation gradient calculation time scales linearly with $\text{dim}(\Theta)=\nP$. In the weight perturbation model, $\nP=\nW$. In \refcite{mgd_paper}, we showed in detail how weight perturbation could be implemented in hardware and can be used for emergent learning in a neuromorphic hardware system. However, there has long been concern that the scaling of this algorithm is too slow for larger modern datasets.  Figure~\ref{fig2-WvsN}(b) depicts the node perturbation algorithm \cite{Flower1992}, where the perturbation is instead applied to the input to the activation function, and a single-layer backpropagation is performed to compute the weight update. In this case, the gradient calculation time scales with the number of nodes in the network. This is significant for dense layers: where the number of weights scale as $\nW$, and the number of activations or nodes scale as $\nP=\sqrt{N}$. However, for different layer types, this may not be the case. For example, in convolutional layers, there may be more activations than weights. This can be seen in \reftab{numparams}, where for example layers 1 and 2 have more activations than weights. The ubiquity of convolutional layers in modern machine learning motivates us to compare the two approaches in terms of speed and then their implementation in hardware. We note that single-layer backpropagation is still a completely local learning rule - signals still do not need to propagate backwards through weights or activations. 

\begin{table}[h]
\centering
\begin{tabular}{|c|c|c|c|c|}
\hline
Layer & Layer Type     & kernel & num. parameters & num. activations \\
\hline
1     & Convolutional  & 3 $\times$ 3 $\times$ 1 $\times$ $d$ & 480             & 37632            \\
2     & Convolutional  & 3 $\times$ 3 $\times$ $d$ $\times$ $d$ & 20784           & 37632            \\
3     & MaxPool  & 2 $\times$ 2  & -           & -            \\

4     & Convolutional  & 3 $\times$ 3 $\times$ $d$ $\times$ 2$d$ & 41568           & 18816            \\
5     & Convolutional  & 3 $\times$ 3 $\times$ 2$d$ $\times$ 2$d$ & 83040           & 18816            \\
6     & MaxPool  & 2 $\times$ 2  & -           & -            \\
7     & Convolutional  & 3 $\times$ 3 $\times$ 2$d$ $\times$ 4$d$ & 166080          & 9408             \\
8     & Convolutional  & 3 $\times$ 3 $\times$ 4$d$ $\times$ 4$d$ & 331968          & 9408             \\
9     & MaxPool  & 2 $\times$ 2  & -           & -            \\

10     & Dense          & 36$d$ $\times$ 4$d$                      & 331968          & 192              \\
11     & Dense          & 4$d$ $\times$ 4$d$                         & 37056           & 192              \\
12     & Dense          & 4$d$ $\times$ 10                          & 1930            & 10               \\
\hline
Total &                &                                            & 1014874         & 132106           \\
\hline
\end{tabular}
\caption{Number of parameters (weights) and activations (nodes) for each layer to be trained in a medium sized network evaluated in this work. In this specific example the depth of the input convolutional layer is $d=48$.}
\label{numparams}
\end{table}

\subsection{Perturbative gradient estimation}
We introduce perturbations as a function of time
\begin{equation}
\label{def:dTheta}
\dTheta(t):= \big(\dtit{1}{t},\dtit{2}{t},\ldots,\dtit{\nP}{t}\big)
\end{equation}
Many different types of perturbations can be implemented \cite{Dembo1990, mgd_paper}. For example, components of $\dTheta$ can be assigned distinct frequencies, or they can be associated with elements of an orthogonal code. In this work, we used the later, Bernoulli perturbations where simultaneous discrete perturbations of $\dtheta_i(t)=\pm\delta$ for every parameter every timestep. We consider time to be discrete, $t=1,\ldots,\nT,$ and each parameter is associated with an independent random variable. Thus, \eqref{def:dTheta} represents a collection of $\nP\cdot \nT$ random variables in total. We assume that all random variables are independent, and that the distributions are fixed over time and are mean zero.

It follows that the covariance matrix is diagonal and constant over time
\begin{equation}
\label{eqn:thetaCov}
    \LL:=\E{\dTheta(t)\dTheta(t)^T} = \E{\dTheta(t')\dTheta(t')^T}={\rm diag}(\s{1}^2,\ldots,\s{\nP}^2).
\end{equation}

Multiplexed gradient descent uses a correlation analysis of the perturbed loss function to provide an estimate of its gradient. By Taylor expansion we have (the dependence on $\W$ is omitted for readability)
\begin{equation}
\dC(\dTheta(t)) := C(\dTheta(t)) - C(\Zero) = \gradC\cdot\dTheta(t) +\mathcal{O}(\dtit{i}{t}\dtit{j}{t}).
\end{equation}
If we assume that the perturbations are characterized by a small scale, $\max_{i,t} {\rm E}(\dtitn{i}{t}{2}) \leq \epsilon$,
then for $\epsilon$ sufficiently small, one may approximate $\dC$ by
\begin{equation}
\label{eqn:linearCost}
\dC(t) 
 \approx \gradC\cdot\dTheta =  \sum_{i=1}^{\nP} \dCi{i}\dtit{i}{t}.
\end{equation}
Finally, we define the MGD random vector $\Gmgd$ as
\begin{align}
\label{def:G}
\Gmgd &:= \frac{1}{\nT}\LL^{-1}\sum_{t=1}^{\nT} \dC(t)\dTheta(t), \mbox{ which in component form is}\\
  G_m &= 
  \frac{1}{\nT\s{m}^2}\sum_{t=1}^{\nT}\sum_{\mu=1}^{\nP} \dCi{\mu}\dtit{\mu}{t}\dtit{m}{t}
\end{align}
where we recall the covariance matrix of $\Theta$ defined in \refeq{eqn:thetaCov}. We claim that the random vector $\Gmgd$ can be used as an approximation to $\gradC$ in the optimization routines that underpin machine learning. In the appendix we show that $\Gmgd$ is an unbiased estimator of the gradient, ${\rm E}(\Gmgd)=\gradC$, and we compute its full covariance matrix, $\cov(\Gmgd)$. The key result is that this $\cov(\Gmgd)$ tends to zero as $\nT\to\infty$. In other words, averaging over a long time-period yields an increasingly deterministic approximation to $\gradC$. These technical points motivate the following numerical experiments. 

\begin{figure}[H] 
    \centering
    \includegraphics[width=3.5in]{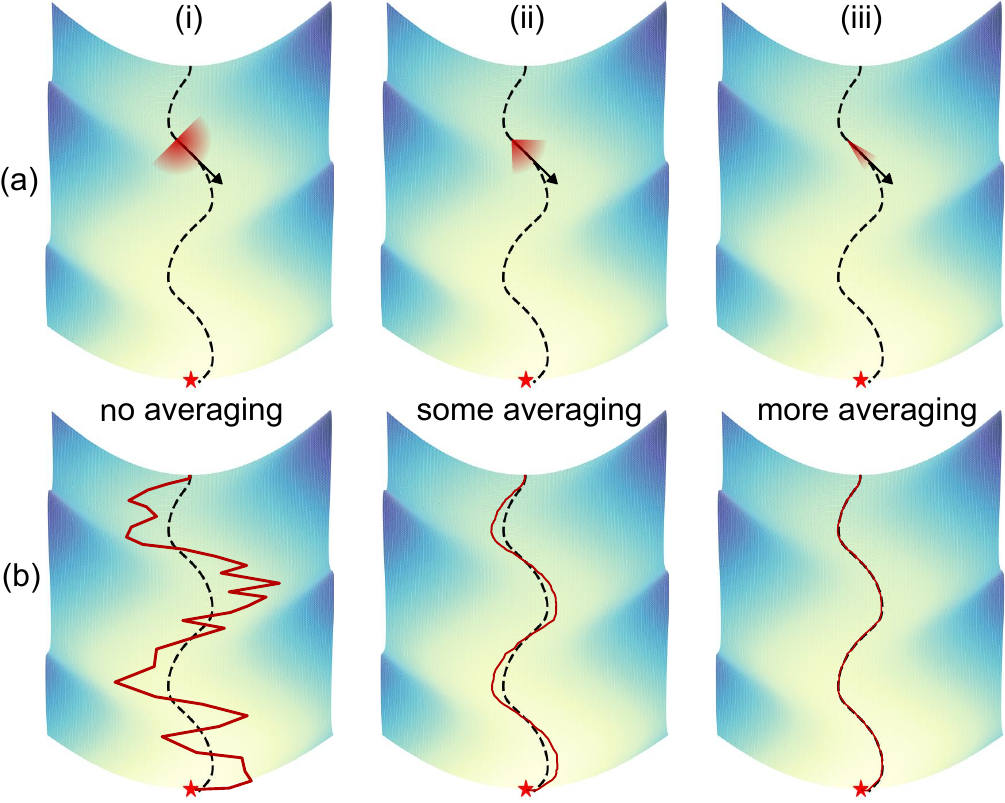}
    \caption{Illustration of gradient descent down a cost landscape using MGD, for varying amounts of averaging. (a) In MGD, the true gradient (black dashed line) is not presumed to be known analytically, and is instead estimated. The gradient estimation (red-shaded regions) starts out inaccurate but can be refined to arbitrary precision with additional averaging toward the true gradient.  (b) The resulting weight updates and trajectory down the cost-landscape (red lines) do not strictly follow the gradient, but instead deviate to a degree that depends on the amount of averaging.}
    \label{fig1-cartoon}
\end{figure}

\subsection{Network accuracy versus gradient accuracy}
\reffig{fig1-cartoon} presents a schematic representation of a cost landscape, and a illustrative comparison of descent curves using MGD versus backpropagation. The black arrow in \reffig{fig1-cartoon}(a) represents the direction of the true gradient as calculated by backpropagation (\Gtrue), while the black dashed line in both (a) and (b) represents the steepest descent path that the backpropagation algorithm would follow in the cost landscape. For comparison, the possible directions for the gradient estimate from MGD (\Gmgd) are represented by the red shading around the arrow, while the red solid line shown in (b) illustrates the path taken by the MGD algorithm. It can be seen from the figure that the MGD algorithm tends to follow the general direction of gradient descent backpropagation, with additional stochasticity. 

The level of stochasticity is determined by the gradient-integration time constant, \tautheta, and has been discussed in detail in \refcite{mgd_paper}. For short gradient integration time \tautheta (e.g. \reffig{fig1-cartoon} (i)), the MGD algorithm computes a directional derivative along a random direction, and the descent is highly stochastic. For longer gradient integration times \tautheta (e.g. \reffig{fig1-cartoon}  (ii) and (iii)), the gradient estimate \Gmgd calculated by MGD converges to the direction of \Gtrue. In this case, the MGD algorithm replicates the behavior of the backpropagation. Note that no matter what the value of \tautheta, \Gmgd is always pointing ‘downhill’, although it may not be pointed in the direction of steepest descent. 

In \refsec{sec:simulations}, we demonstrate that a network can be trained to the same accuracy as backpropagation even without the need for averaging (large \tautheta) at each step. We will also discuss the scaling of the gradient estimation time and the training time with network size. These two convergences are illustrated, respectively, across the top and bottom rows of \reffig{fig1-cartoon}. We define the `gradient estimation time' as the time it would take for the cosine of the angle between \Gmgd and \Gtrue to be $>$ 0.95. The ‘training time’ is the time taken from initialization to reach a particular cost or accuracy value.

\section{Scaling of the MGD algorithm training speed}
\label{sec:simulations}
In the following simulations we compare the performance of the MGD algorithm in terms of training time for networks of different sizes using both weight and node perturbation, and compare it to the gradient estimation time. The learning rates for weight and node perturbation were kept approximately equivalent using the normalization calculations in Appendix A.
Using the MGD time constants, we can perform quantitative comparisons of the speed of training these networks. The network we used throughout the paper consisted of 6 convolution neural network layers followed by 3 dense neural network layers with 3 max-pool layers in between (see \reftab{numparams}). The network was trained to classify images in the FashionMNIST dataset. We kept this architecture constant but we varied the size of the layers ($d=$ 1, 2, 4, 8, 16, 32, 48, 64) to change overall network size. \reftab{numparams} shows an example of such a network with $\sim$1 million parameters ($d=$ 48). The algorithm used for network training is described in Appendix A.

\begin{figure}[H] 
    \centering
    \includegraphics[width=\textwidth]{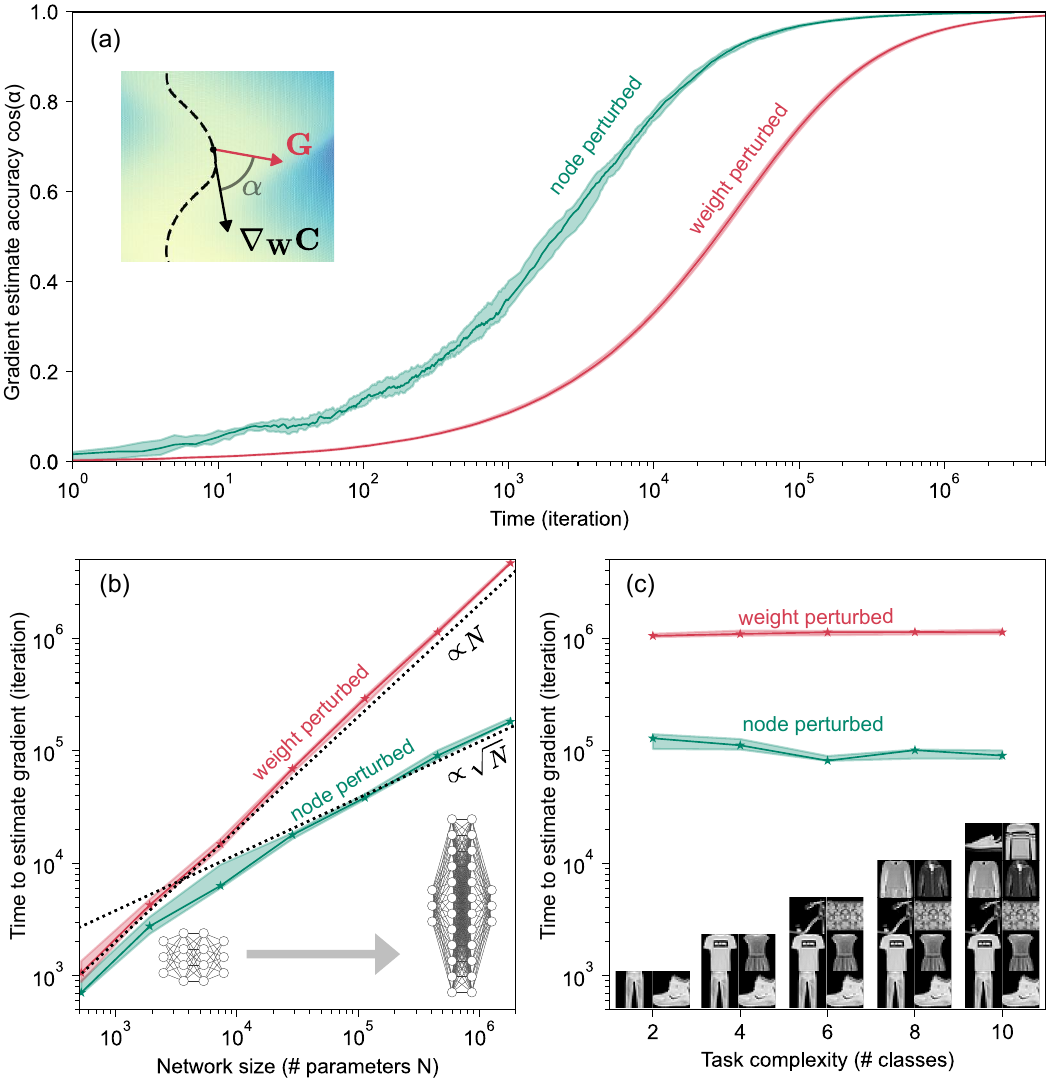}
    \caption{Analyzing the behavior of gradient estimation in MGD. (a) The accuracy of the gradient estimation versus time. Each iteration refines the the accuracy of the gradient estimation, defined as the cosine of the angle ($\alpha$) between the true local gradient $\gradCW$ and local gradient estimated by MGD $\Gmgd$ for a network with \nW$=4.52\times10^5$ parameters. (b) Number of iterations it takes for $\cos(\alpha)$ to reach 0.95 versus total number of trained parameters in the network (\nW). Dashed lines represent the expected \nW and $\sqrt{\nW}$ scaling for weight nad node perturbation, respectively. (c) Number of iterations it takes for $\cos(\alpha)$ to reach 0.95 (or 95\% gradient estimation accuracy) versus the task complexity. The shaded regions represent the upper and lower quartile bounds of 10 random initializations, and the solid line corresponds to the median.}
    \label{fig3-angleconv}
\end{figure}

\subsection{Gradient estimation}
We first investigate the time to accurately estimate the gradient for the architecture described in the previous section as a function of the number of network parameters. To measure the gradient estimation accuracy as a function of time, we begin by randomly initializing the network.  Then, for each iteration, we generate a new gradient estimate in the MGD fashion by randomly perturbing the network, measuring the resulting change in cost, and multiplying them together to get \Gmgd as in \eqref{def:G}. Each iterative estimation of \Gmgd is summed together and the total is compared with the true local gradient\Gtrue (computed by backpropagation), generating the plots shown in \reffig{fig3-angleconv}a. This was performed for both weight perturbation (red lines) and node perturbation (green lines). It is clear in both cases that when averaged for a sufficiently large number of iterations, \Gmgd becomes a perfect proxy for \Gtrue (matching both direction and amplitude of \Gtrue as shown in Appendix A). As evident from this figure, node perturbation can estimate \Gtrue much faster than weight perturbation due to the smaller number of independent perturbations involved. This simulation was repeated for multiple networks with varying number of parameters, where the depths of the network, hence number of layers was kept constant, and number of parameters in each layer was proportionally scaled. \reffig{fig3-angleconv}b shows the number of iterations required for the estimated gradient \Gmgd to be approximately aligned with the true gradient ($\cos{\alpha}=0.95$) as a function of network size. It is clear that, as expected, the time required to accurately estimate \Gtrue using MGD scales with the number of total parameters trained in the case of weight perturbation, and with the square root of the number of total parameters trained in case of node perturbation. We additionally performed a similar set of simulations varying task complexity for a fixed network size of $4.52\times10^5$ parameters. To vary task complexity we simply used the subset of data in FashionMNIST dataset corresponding to 2, 4, 6, 8 or 10 classes. Here, a less complex task means that there are less classes to classify images into. As expected and is evident from \reffig{fig3-angleconv}c task complexity has no effect on gradient estimation.  These results are in keeping with previous results using perturbative techniques.

\begin{figure}[H] 
    \centering
    \includegraphics[width=\textwidth]{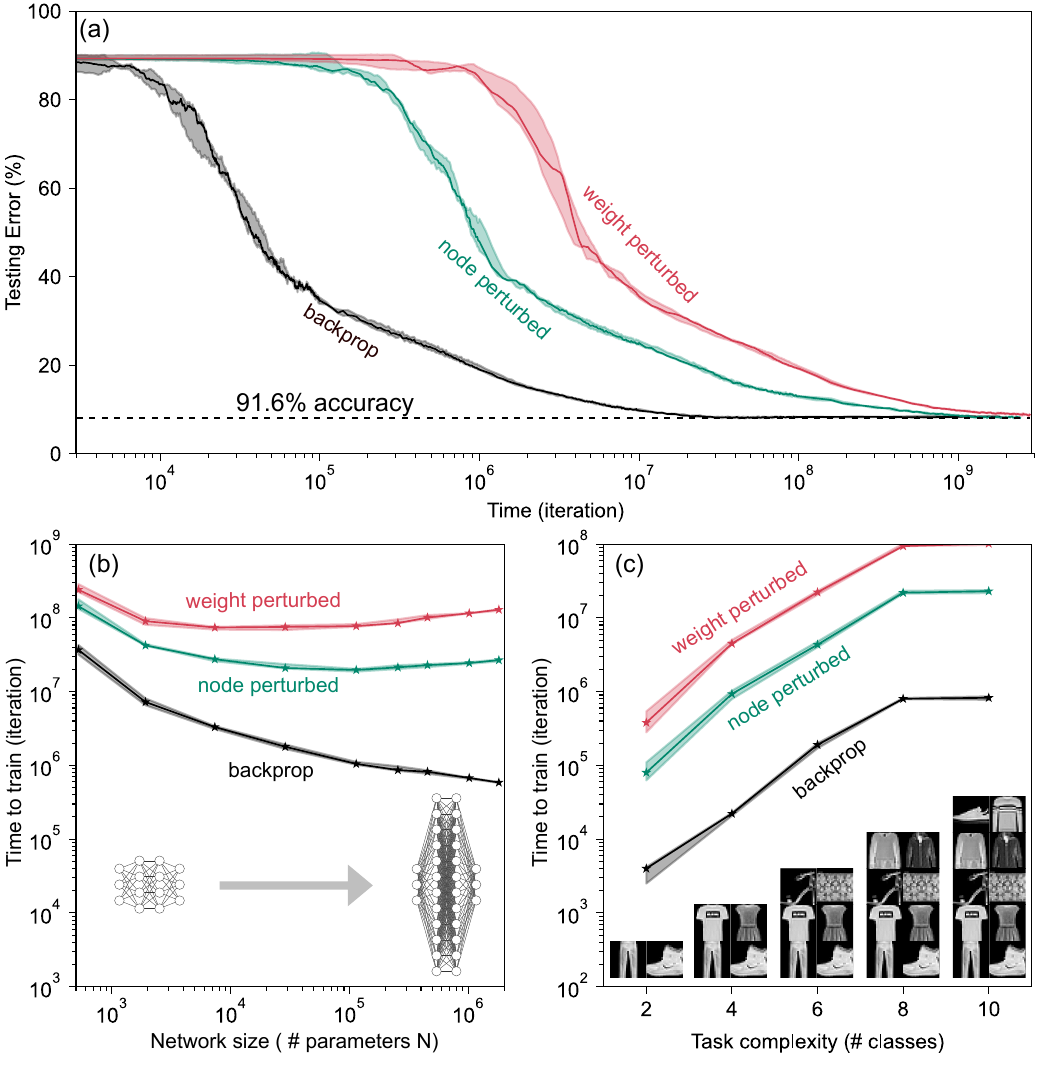}
    \caption{Time needed to successfully train networks with MGD as a function of network size and task complexity (a) Testing error versus number of iterations for network with $\nW=2.55\times 10^4$ parameters, showing that the MGD algorithm can match the same final accuracy of 91.6\% as backpropagation. Although backpropagation is generally not available in hardware, here in simulation its analytical nature provides the fastest time-to-train as expected. (b) Number of iterations it takes for the network to be trained to 80\% testing accuracy versus total number of trained parameters in the network N. (c) Number of iterations it takes for the network to be trained to 80\% testing accuracy versus the task complexity.
}
    \label{fig4-accuracy}
\end{figure}

\subsection{Image classification}
While \reffig{fig3-angleconv} shows that the scaling of the gradient follows the expected trends, we note that this is not the important figure of merit for ML tasks--ultimately, computing the gradient is only a means towards training the network. We next investigated whether the same scaling applies to the time to train a network as to the time to estimate the gradient. This scaling cannot be easily predicted analytically. However, in real-world applications, peak accuracy and time to reach that peak accuracy are typically the relevant performance metrics. This has not been previously investigated in detail for perturbative algorithms. \reffig{fig4-accuracy}(a) shows testing error versus number of iterations for a network with $2.55\times 10^5$ parameters trained to classify all 10 classes of FashionMNIST dataset for weight perturbation (red) and node perturbation (green) and backpropagation (black). As before, an iteration on the x-axis represents a single perturbation and subsequent gradient estimation of the network. To do a comparison on the scale with backpropagation (black line in \reffig{fig4-accuracy}), we perform the same simulation but use the true gradient instead of \Gmgd for the weight update. In this demonstration we set the gradient integration time \tautheta to 1, hence weights are updated after each perturbation with no gradient averaging. This means that we are moving along a direction that is always downhill (directional derivative), but a poor gradient estimate (see \reffig{fig1-cartoon}(i)). We can see that both weight and node perturbation can achieve the same accuracy as backpropagation on the same network architecture.

This simulation was then repeated for the same set of network sizes as in \reffig{fig3-angleconv}(b). \reffig{fig4-accuracy}(b) shows the number of iterations required for the network to reach 80\% accuracy versus the total number of parameters in the network. The time to reach 80\% accuracy varies by less than one order of magnitude over $>3$ orders of magnitude change in network size for both weight and node perturbation. This is markedly different from the gradient estimation time in \reffig{fig3-angleconv}(b), which in the case of weight perturbation scales proportionally to network size. This result is in contrast to some of the scaling arguments that have been previously made about perturbative techniques \cite{hinton2022forwardforward, Lillicrap2020}.

In addition, while node perturbation reduces the time required to reach a given accuracy when compared to weight perturbation, the performance enhancement depends on the required accuracy, and is not a simple relationship as for the gradient estimation time. For example, in \reffig{fig4-accuracy} b) we observe that the time to reach 80\% accuracy using node perturbation was reduced by approximately a factor of 2 over node perturbation for most network sizes.  However, we can see from the individual cost versus training time curve in \reffig{fig4-accuracy}(a) that this scaling factor was different depending on the desired final accuracy. For example, node perturbation is 10$\times$ faster to get to 50\%. As we did in the previous section, the same comparison was then performed for varying task complexities. \reffig{fig4-accuracy}(c) shows that task complexity strongly affects the time required to reach 80\% accuracy. The task complexity is clearly very important to the training time and can supersede network size in affecting training time.

These results demonstrate conclusively that a network with more than 1 million parameters can be trained to the same testing accuracy as backpropagation using perturbative zero-order optimization techniques. Additionally, the results indicate that while the time required to accurately estimate \Gtrue is indeed proportional to the number of trained parameters in the network, time required to reach a set accuracy target is a more complicated function of network size. This result is in keeping with previous observations that an accurate gradient estimate is not required for machine learning. While MGD is slower when compared to backpropagation when simulated on a digital computer, iterations could be implemented very quickly on a dedicated analog hardware.  To get an intuitive sense of this, consider that $10^9$ MGD time iterations would take 16 minutes on hardware with a modest speed of 1 MHz for perturbations, inference and updates.  Another interesting observation is that although there is a speedup associated with node perturbation, in general we find that node perturbation does not perform as well as expected compared to weight perturbation. For example, node perturbation is only 2$\times$ faster than weight perturbation to get to 80\% accuracy for a network of 1 million parameters. This indicates that learning rate and other hyperparameter factors may be as or more important than node versus weight perturbation in determining time to solution. Additionally, convolutional layers can sometimes have fewer weights than activations, invalidating any scaling advantage to node perturbation. In general, we found that it is more difficult to optimize node perturbation than weight perturbation, suggesting that node perturbation may be more sensitive to hyperparameter choices.

\section{Tailoring MGD to specific hardware}
MGD allows training to be optimized for specific hardware platforms. For example, some hardware platforms are based on non-volatile memory technologies with slow update speeds and a limited number of write-erase cycles before the memory elements begin to deteriorate. This type of hardware can be accommodated by adjusting the MGD time constants to reduce the number of weight updates. All of the examples we described above were calculated for $\tautheta = 1$, meaning the gradient was estimated for a single time step and the network was updated immediately afterwards. Increasing this integration time such that $\tautheta \gg 1$ can give similar performance with fewer overall weight updates. In a practical implementation, this could be accomplished if perturbations are implemented separately and faster than weight updates, for instance by placing a fast, volatile, perturbation element (such as a transistor) in series with a the slower-to-update and less-durable weight.

\begin{figure}[H] 
    \centering
    \includegraphics[width=\textwidth]{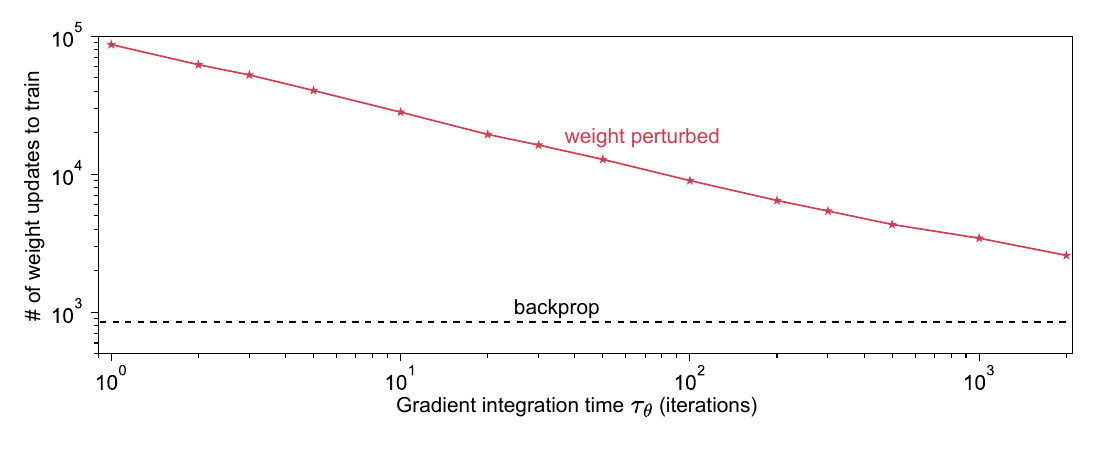}
    \caption{Number of weight updates required to train FashionMNIST on a $2.55\times10^5$ parameter network to 80\% testing accuracy versus \tautheta.
}
    \label{fig5-thetascan}
\end{figure}

\reffig{fig5-thetascan} shows the total number of weight updates it takes for the network to be trained to 80\% testing accuracy as a function of the gradient integration time \tautheta. The total number of weight updates required to train the network can be seen to decrease as a function of integration time, approaching the number of updates required for backpropagation. This is due to the accuracy of \Gmgd increasing with \tautheta. As can be seen from the figure, the number of weight updates can be reduced by several orders of magnitude using this technique. However, in the absence of weight update constraints, setting $\tautheta=1$ still results in the fastest overall training time (as would be measured in terms of wall-clock time on a real piece of hardware). In general, the MGD framework has the flexibility to match the details of the optimization algorithm to the hardware constraints.

A second example of the flexibility of the MGD platform is the implementation of different types of perturbations, including both weight and node perturbations. Any set of mean-zero orthogonal perturbations can be used to yield equivalent results \cite{mgd_paper, Dembo1990}. However, the effect of bias or non-orthogonal perturbations has not yet been explored fully. Node perturbation provides modest improvements in training time over node perturbation. However, implementing these two different training approaches on hardware presents distinct challenges.  To implement node perturbation in the form described in this paper, each synapse requires a circuit capable of performing one multiplication operation to compute the product of the global change in cost $\Loss$ and the perturbation $\dti{w}$. The algorithm can in principle be simplified further to implement only additions or boolean multiplications at the synapse. In the case that a longer integration time $\tautheta > 1$ is desired to reduce the number of weight updates, an additional memory element is required per synapse to store intermediate gradient values. 

Node perturbation requires single-layer backward data transfer, which reintroduces some of the same challenges that we face when trying to implement backprop in a hardware. For example, to compute the gradient estimate (\Gmgd) in node perturbation, each neuron must have circuitry capable of performing $2N$ multiplication operations (the cost change $\dC$ multiplied by the perturbation $\dti{k}$ multiplied by the synapse input $x_j$ ($\dC\dti{k}x_j$)). Each neuron must also store inputs to a layer in a memory buffer, hence requiring larger local memory on hardware. Another requirement of node perturbation is that weights must be linear to facilitate the backpropagation of the multiply-accumulate (MAC) process. Conversely, weight perturbation makes no assumptions about the network architecture, including linearity of synaptic operations. Weight perturbation does however require one perturbation per weight, while node perturbation only requires one perturbation per node.  This may offer significant advantages in hardware with limited write cycles and a high cost to perturbations. Overall, the choice between these two approaches depends on the specific hardware constraints and requirements.

Finally, additional hardware or architecture specific tricks beyond those described in this paper can be used to reduce the number of parameters to be perturbed in a network. For example, an approach combining perturbative training with the tensor train method \cite{Zhao2023} was proposed recently.

\section{Optimizers}

Modern problems require more than just basic stochastic gradient descent due to its relatively slow convergence rate. A wide variety of specialized optimizers and other tricks are used to improve the training process as networks become larger and deeper.  Some common examples of specialized optimizers are Momentum, AdaGrad, and Adam. Other `tricks’ for training large and deep networks include well-designed network initializations, dropout, local competition strategies, and architecture changes such as skip-layer connections.  Given the ubiquity of these tricks in modern machine learning, it would be unfair to compare the performance of basic MGD with the performance of backpropagation with decades worth of optimization built on it.

However, since MGD can be used to compute the gradient to an accuracy close to that of backpropagation, all of these standard tricks may still be applicable and directly translatable into the framework, although the hardware requirements will inevitably increase. In particular, we expect that Momentum, Adam, and Dropout will require additional local memory and multiplications to implement.  Other techniques such as batch normalization are likely to prove significantly more challenging to implement due to the requirement that many network parameters throughout the network be accessed to compute the normalization required for a single parameter update during training. \reffig{fig6-etascan} is a preliminary result demonstrating that MGD can take advantage of more modern optimizers.

\begin{figure}[H] 
    \centering
    \includegraphics[width=\textwidth]{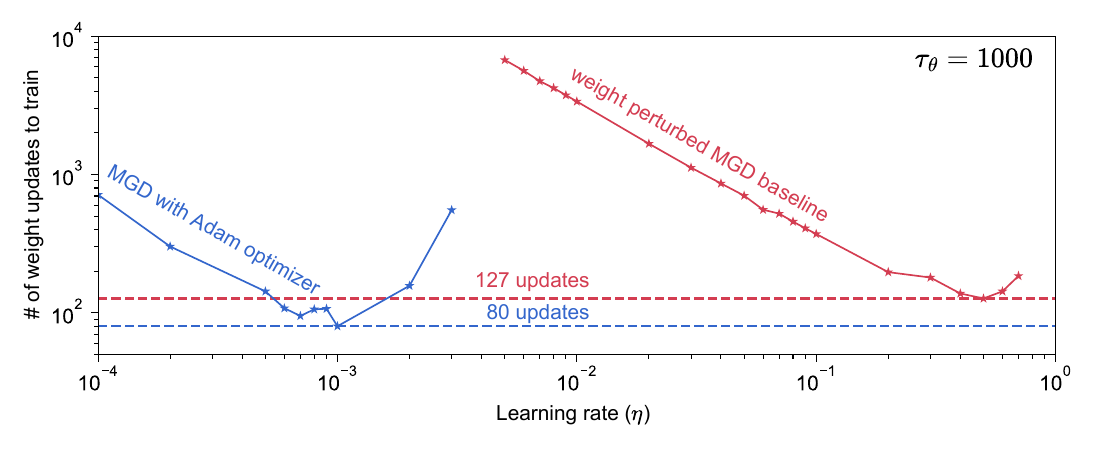}
    \caption{Number of weight updates required to train FashionMNIST to 80\% accuracy on a 255,000-parameter network using MGD with vanilla gradient descent (red) and with the Adam optimizer (blue).}
    \label{fig6-etascan}
\end{figure}

The red line shows the number of updates it takes for the network to reach 80\% testing accuracy using weight perturbation and vanilla stochastic gradient descent. The update rule for vanilla stochastic gradient descent is
\begin{equation}
\label{eq:vanillaupdate}
\W_t = \W_{t-1} - \eta \Gmgd.
\end{equation}
where $\W$ represents the network weights, \Gmgd is the MGD gradient and $\eta$ is the learning rate. We then implemented the Adam optimizer update rule instead of vanilla gradient descent as shown in Eq. \ref{eq:adam}. (Arithmetic operations on vectors are interpreted component-wise.)
\begin{subequations}
\label{eq:adam}
\begin{align}
    \mm_t &= \beta_1 \mm_{t-1} + (1-\beta_1)\Gmgd \\
    \vv_t &= \beta_2 \vv_{t-1} + (1-\beta_2) \Gmgd^2 \\
    \W_t &= \W_{t-1}-\eta\Big(\frac{\mm_t}{\sqrt{\vv_t} + \epsilon}\Big)
\end{align}
\end{subequations}
The vectors $\mm_t$ and $\vv_t$ represent exponentially decaying moving averages of the gradient and its square, while $\beta_1$, $\beta_2$ and $\epsilon$ are user-selected hyperparameters. Following standard guidance we set $\beta_1 = 0.9$, $\beta_2 = 0.999$ and $\epsilon=10^{-8}$ \cite{kingma2014}. To avoid any numerical instabilities in the simulation, the learning rate is set to $\eta=0$ for the initial 1000 iterations.

The simulations were run using the MGD gradient estimate (with $\tautheta = 1000$) within the Adam update rule. The pink line in \reffig{fig6-etascan} shows the number of updates it takes for the network to reach 80\% testing accuracy as a function of the learning rate. We find that there is a 37\% decrease in the number of updates required when using the Adam optimizer. This result demonstrates that MGD can serve as a drop-in replacement for the gradient computed by backpropagation in modern optimization algorithms. This opens the possibility for significantly improving the speed of MGD on different hardware platforms. Other results in the literature indicate similar advantages to augmenting perturbative gradient descent with modern machine learning.  For example, in \refcite{Ren2023}, it was shown that adding greedy local learning significantly improved the training speed for a network of 13 million parameters classifying MNIST and CIFAR10.

\section{Discussion}
The unexpected effectiveness of MGD fits into a growing body of literature showing that an accurate gradient estimate is not required for training neural networks \cite{Lillicrap2016}. The scaling results are consistent with current theoretical understanding of how network size affects training difficulty. This can be seen in investigations of architectures and learning algorithms based on hidden layers with fixed parameters, such as reservoir computers \cite{Yan2024} and extreme learning machines \cite{Huang2006}. As layers become wide enough, training of the inner layers becomes unnecessary altogether, and high accuracy can be obtained with only a linear solve on the output layer (a much faster and simpler algorithm). This paper therefore adds to the body of literature indicating that effective training of neural networks can been done with surprisingly simple algorithms.

Perturbative learning was implemented in CMOS hardware in the 1990s \cite{Matsumoto1990,Cauwenberghs1992, Alspector1992, Kirk1992, Maeda1995, Cauwenberghs1996,Moerland1996, Montalvo1997,Miyao1997, Draghici2000}, but hardware networks only reached a scale of $\sim 10$ weights before the community as a whole shifted towards digital communication and address-event representation \cite{Mead2020}. In the past decade, there has been a resurgence of interest in analog and analog digital neuromorphic hardware. This includes research into novel memristive, optical, and superconducting implementations, among others. Training with conventional backpropagation based algorithms has proved challenging for these types of analog hardware. Perturbative algorithms have been gaining in popularity within this community \cite{Adhikari2015, Wang2018, Bandyopadhyay2022} due to simplicity of implementation. MGD provides a framework for evaluating the training speed of these perturbative algorithms given particular hardware constraints. We hope showing the scalability of these algorithms will prove useful to the development of new hardware.

It has been speculated that perturbative learning takes place in the brain \cite{Seung2003, Fiete2006}, but there is skepticism that it can play a major role due to the poor scaling of the time to estimate the gradient \cite{Lillicrap2020} and results based on training of linear networks \cite{Werfel2005}. The result of this paper -- that the time to train does not scale directly with network size -- will hopefully prompt some re-examination of this conclusion. In this context, another important feature of the MGD framework is that weight updates require only information that is spatially local to the synapse, and a single globally-broadcast cost-change signal.   Unlike backpropagation, signal propagation in the MGD framework is always in the forward direction, the derivative of the activation function is not required to be known or used, and there are no separate forward and backward phases in the learning algorithm. Additionally, small scale demonstrations of training of spiking networks by these techniques were implemented in the early 2000s \cite{Seung2003, Fiete2006}, and have undergone a recent resurgence in interest \cite{Mukhoty2023, Xiao2024}. This offers up the intriguing possibility that this is a general training technique that can apply equally to spiking and non-spiking networks.  Further work on this is required to reach conclusions about spiking networks. 

In this paper we showed modest improvements to the scaling of the time to train a neural network by using node perturbation instead of weight perturbation. The choice of weight or node will likely be influenced strongly by the specifics of the hardware implementation. We also note as a qualitative observation that we found node perturbation to be more challenging to optimize than weight perturbation at almost every stage of its implementation. Other recent work on node perturbation has shown that the scaling of node perturbation is worse when recurrent networks are considered \cite{Zuge2023} and there can be issues with stability \cite{Hiratani2022}. In summary, the trade-offs between node and weight perturbation in hardware implementation highlight the need for a careful evaluation of the specific requirements and capabilities of the hardware platform.  This evaluation ensures that the chosen method aligns with the overall goals of efficiency, speed, and resource utilization.

Finally, we note that perturbative techniques have taken off recently in other contexts beyond the training of neuromorphic hardware. Recent works have investigated similar algorithms for fine-tuning of large language models \cite{Malladi2023,Chen2023,Han2024}. This is because less memory is required to implement than is required for backpropagation. Therefore large language models can be fine-tuned on smaller machines than required for the original training. 

\section{Conclusion}
Our investigation into multiplexed gradient descent (MGD) demonstrates its potential as a scalable and efficient training method for neuromorphic hardware. Despite common misconceptions, we have shown that perturbative methods like MGD can scale to large networks. Although the time required for the stochastic variable $\Gmgd$ to accurately estimate the true gradient $\gradCW$ scales with the number of parameters, it is nevertheless possible that the time to reach a fixed accuracy target can be independent of network size. 

Furthermore, our results demonstrate how MGD can be swapped for backpropagation algorithmically, allowing the application of standard gradient descent enhancements like momentum and Adam. This flexibility enables MGD to achieve comparable testing accuracy as backpropagation, even for networks with over one million parameters. Although MGD may appear slower than backpropagation in simulations, the speed on physical hardware depends on the time to execute iterations of the algorithm. MGD may be much simpler and faster to implement than backpropagation on emerging analog hardware, and speeds competitive with backpropagation training on conventional hardware are plausible given our simulation results. Overall, MGD provides a practical and adaptable solution for on-hardware training, capable of leveraging advanced optimizers and handling large-scale networks within reasonable time frames. Our findings underscore the importance of considering both the hardware architecture and the specific needs of the training process when selecting perturbation methods, paving the way for more efficient and scalable neuromorphic computing systems.

\section*{Data availability}
The data used for all the plots in this work are available from the corresponding author on reasonable request.

\section*{Code availability}
The MGD library and code used to perform these simulations are available on \newline \href{https://github.com/bakhromtjk/mgd_scaling}{\texttt{github.com/bakhromtjk/mgd\_scaling}}.

\section*{Acknowledgments}
The U.S. Government is authorized to reproduce and distribute reprints for governmental purposes notwithstanding any copyright annotation thereon. Analysis performed in part on the NIST Enki HPC cluster. This research was funded by NIST (https://ror.org/05xpvk416) and University of Colorado Boulder (https://ror.org/02ttsq026).

\section*{Author contributions}
ANM, SMB conceptualized these experiments. BGO run the simulations. Analysis and interpretation of the data was done by BGO, ANM, SMB, and AD. All authors co-wrote the manuscript.

\section*{Declaration of interests}
The authors declare no competing interests.

\appendix
\section{Mean and Covariance of $\Gmgd$}
In this section we compute the first two moments---i.e., the mean and covariance---of $\Gmgd$ defined in \eqref{def:G}. Note that the diagonal of the covariance matrix can be used to compute the expectation of the squared norm, ${\rm E}\big(||\Gmgd||^2\big)$.

\subsection{Random perturbations}
In the stochastic perturbation model of MGD the perturbations are a collection of random variables,
\[
\{\dtit{i}{t}|\text{ for }i=1,\ldots,\nP\text{ and }t=1,\ldots,\nT\}.
\]
Here $\nP$ is the dimension of the parameter vector and $\nT$ is the total number of time steps. We assume the following
\begin{enumerate}
\item $\dtit{i}{t}$ are ``symmetric'' meaning that $\E{\dtitn{i}{t}{2k+1}}=0$ for $k\ge0$.
\item $\dtit{i}{t}$ can be distinct over $i$, but are identical over $t$.
\item $\dtit{i}{t}$ are independent over $i$ and $t$.
\end{enumerate}
These assumptions are reasonably general and lead to significant simplifications in the following computations. For reference we define the variance and fourth moment for each index
\begin{align*}
\E{\dtitn{i}{t}{2}} &= \E{\dtitn{i}{t'}{2}} = \s{i}^2\\
\E{\dtitn{i}{t}{4}} &= \E{\dtitn{i}{t'}{4}} = \a{i}^4
\end{align*}
Note these quantities depend on index but not time.

\subsection{Moments of $\Gmgd$}
We assume that magnitudes of the perturbations are sufficiently small that the cost function can be approximated by its linear expansion \eqref{eqn:linearCost}. Thus, for a perturbation vector at time $t$, the change in cost is given by
\[
\dC(t) = \sum_{i} \dCi{i}\dtit{i}{t}
\]
From \eqref{def:G}, we write component MGD vector as
\[
G_m  = \frac{1}{\nT\s{m}^2}\sum_{t}\sum_{i} \dCi{i}\dtit{i}{t}\dtit{m}{t}.
\]
We prove the following theorem

\begin{theorem}\label{thm:MGD}
The MGD random variable, $\Gmgd$, is an unbiased estimator of $\gradC$
\begin{equation}
\label{eqn:meanG}
{\rm E}\big(\Gmgd\big) = \gradC.
\end{equation}
The covariance matrix is given by
\begin{equation}
\label{eqn:covG}
\cov\big(\Gmgd\big)_{m,n} = 
\begin{cases}
\displaystyle
\frac{1}{\nT}\left(
      \left(\dCi{m}\right)^2\left(\frac{\a{m}^4}{\s{m}^4}-1\right)
    + \sum_{\mu \ne m} \left(\dCi{\mu}\right)^2\frac{\s{\mu}^2}{\s{m}^2}
  \right), & m=n \\
\displaystyle
\frac{1}{\nT}\dCi{m}\dCi{n}, &m \ne n
\end{cases}
\end{equation}
\end{theorem}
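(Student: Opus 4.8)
The plan is to compute the mean and covariance directly from the component formula
\[
G_m = \frac{1}{\nT\s{m}^2}\sum_{t=1}^{\nT}\sum_{i=1}^{\nP}\dCi{i}\,\dtit{i}{t}\,\dtit{m}{t},
\]
treating $\dCi{i}$ as deterministic constants (the gradient of the cost at the fixed operating point) and exploiting the three structural assumptions on the perturbations: symmetry (all odd moments vanish), time-homogeneity, and independence across both the parameter index $i$ and the time index $t$. For the mean, I would take expectation inside the double sum and observe that $\E{\dtit{i}{t}\dtit{m}{t}}$ equals $\s{m}^2$ when $i=m$ and vanishes otherwise (independence plus mean zero). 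Only the $i=m$ term survives, and summing $1$ over $t=1,\ldots,\nT$ cancels the $1/\nT$, leaving $G_m$ with mean $\dCi{m}$; assembling components gives $\E{\Gmgd}=\gradC$, which is \eqref{eqn:meanG}.

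For the covariance I would write $\cov(\Gmgd)_{m,n} = \E{G_m G_n} - \dCi{m}\dCi{n}$ and expand $G_m G_n$ as a quadruple sum over $t,t',i,j$ of $\dCi{i}\dCi{j}\,\dtit{i}{t}\dtit{m}{t}\dtit{j}{t'}\dtit{n}{t'}$, divided by $\nT^2\s{m}^2\s{n}^2$. The key bookkeeping step is to enumerate which index patterns give a nonzero expectation: because every odd moment vanishes, each perturbation random variable must appear an even number of times in a term. I would split into the case $t\neq t'$ (the two time blocks factor by independence, each contributing a factor that is nonzero only when its two indices match, reproducing $\dCi{m}\dCi{n}$ and, after summing $\nT(\nT-1)$ such pairs, essentially cancelling against the $-\dCi{m}\dCi{n}$ up to the $1/\nT$ correction) and the case $t=t'$ (all four perturbations share the same time, so one needs the indices to pair up among themselves). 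Within $t=t'$ the surviving sub-cases are: $i=m,\ j=n$ with $m\neq n$ (gives $\dCi{m}\dCi{n}\s{m}^2\s{n}^2$); $i=j=m=n$ (the diagonal term, producing the fourth moment $\a{m}^4$); and, when $m=n$, also $i=j\neq m$ (producing $\sum_{\mu\neq m}(\dCi{\mu})^2\s{\mu}^2\s{m}^2$). Collecting these, dividing by the normalizations, and subtracting $\dCi{m}\dCi{n}$ yields the two branches of \eqref{eqn:covG}.

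The main obstacle is the careful case analysis of the quadruple sum in the $t=t'$ block: one must be sure not to double-count the configuration $i=j=m=n$ (which is simultaneously a degenerate instance of both ``$i=j$'' and ``$i=m,j=n$'') and must correctly produce the $\a{m}^4/\s{m}^4 - 1$ combination by tracking how the genuine fourth-moment term replaces what would otherwise be a $(\s{m}^2)^2$ contribution. I would handle this by first computing $\E{G_m^2}$ (the diagonal) separately and explicitly, isolating the $\mu=m$ term from $\mu\neq m$, and only then doing the off-diagonal $\E{G_m G_n}$, where the diagonal degeneracy does not arise. A minor secondary point is to state clearly that the linear-cost approximation \eqref{eqn:linearCost} is what licenses treating $\dC(t)$ as exactly $\sum_i \dCi{i}\dtit{i}{t}$; with that in hand the remainder is moment algebra with no further analytic subtlety. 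Finally, I would remark that setting $m=n$ in the diagonal branch and summing over $m$ gives $\E{\|\Gmgd\|^2}$, and that every entry of $\cov(\Gmgd)$ carries the prefactor $1/\nT$, so the estimator becomes deterministic as $\nT\to\infty$, as claimed in the main text.
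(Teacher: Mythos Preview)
Your approach is essentially identical to the paper's---linearize the cost, expand $G_mG_n$ as a quadruple sum, split on $t=t'$ versus $t\ne t'$, and enumerate the index pairings that survive by the even-moment criterion. The mean computation and the diagonal covariance ($m=n$) are correctly outlined and match the paper line for line.

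There is, however, a concrete gap in your off-diagonal case analysis. For $m\ne n$ and $t=t'$ you list only the pairing $i=m,\ j=n$. You have missed the symmetric pairing $i=n,\ j=m$: in that case
\[
\E{\dti{n}\dti{m}\dti{m}\dti{n}} = \E{\dtin{m}{2}}\E{\dtin{n}{2}} = \s{m}^2\s{n}^2
\]
as well, contributing a second copy of $\dCi{m}\dCi{n}\s{m}^2\s{n}^2$. With only one pairing, the $t=t'$ block gives $\nT$ copies and the $t\ne t'$ block gives $\nT(\nT-1)$ copies, for a total of $\nT^2$ and hence ${\rm E}(G_mG_n)=\dCi{m}\dCi{n}$, which after subtracting the product of means would yield $\cov(\Gmgd)_{m,n}=0$---the wrong answer. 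With both pairings the $t=t'$ block contributes $2\nT$ copies, giving the extra $\frac{1}{\nT}\dCi{m}\dCi{n}$ that is the stated off-diagonal covariance. This is precisely the kind of bookkeeping hazard you warned yourself about (``not to double-count''), but in the opposite direction: here you under-counted. Once you add the $i=n,\ j=m$ term, your argument goes through exactly as the paper's does.
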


\begin{proof}
The mean of $G_m$ is computed
\begin{align*}
{\rm E}(G_m)
&= \E{\frac{1}{\nT\s{m}^2}\sum_t\sum_{i} \dCi{i}\dtit{i}{t}\dtit{m}{t}}\\
&= \frac{1}{\nT\s{m}^2}\sum_t\sum_{i} \dCi{i}\E{\dtit{i}{t}\dtit{m}{t}}\\
&= \frac{1}{\nT\s{m}^2}\sum_t\dCi{m}\s{m}^2\\
&= \dCi{m}.
\end{align*}
The second equality follows from linearity of expectation. In going from second to third line, the independence of $\dtit{i}{t}$ and $\dtit{m}{t}$ implies that the expectation of the product is zero for $i \ne m$, and thus these terms in the summation over $i$ drop out. The only non-zero contribution comes from $i=m$, in which case the expected product is $\s{m}^2$. Finally, as $\s{m}^2$ does not depend on $t$ and there are $\nT$ terms in the sum over $t$, the result follows. Thus, we have \refeq{eqn:meanG}.

We next compute the covariance matrix
\begin{align}
\cov\big(\Gmgd\big)
    &= {\rm E}\big(\Gmgd^T\Gmgd\big)-{\rm E}\big(\Gmgd\big)^T{\rm E}\big(\Gmgd\big) \nonumber\\
    &= {\rm E}\big(\Gmgd^T\Gmgd\big)-\gradC^T\gradC.
\label{def:covG}
\end{align}
First we compute the off-diagonal terms,
\begin{align*}
{\rm E}\big(G_m G_n\big)
    &=\E{\frac{1}{\nT^2\s{m}^2\s{n}^2}
        \sum_t\sum_{i} \dCi{i}\dtit{i}{t}\dtit{m}{t}
        \sum_{t'}\sum_{j} \dCi{j}\dtit{j}{t'}\dtit{n}{t'}}\\
    &=\frac{1}{\nT^2\s{m}^2\s{n}^2}
        \sum_{t,t'}\sum_{i,j} \dCi{i}\dCi{j}
        \E{\dtit{i}{t}\dtit{m}{t}\dtit{j}{t'}\dtit{n}{t'}}
\end{align*}
Split the last sum into two cases: $t=t'$ and $t \ne t'$. In either case, any non-zero expectations will be constant as a function of $t$ resulting in $\nT$ summands in the first case and $\nT(\nT-1)$ in the second. Thus we have
\small
\begin{multline*}
    \sum_{t,t'}\sum_{i,j} \dCi{i}\dCi{j}
        \E{\dtit{i}{t}\dtit{m}{t}\dtit{j}{t'}\dtit{n}{t'}} = \\
\nT\sum_{i,j} \dCi{i}\dCi{j}
        \E{\dti{i}\dti{m}\dti{j}\dti{n}} +
\nT(\nT-1) \sum_{i,j} \dCi{i}\dCi{j}
        \E{\dti{i}\dti{m}}\E{\dti{j}\dti{n}}
\end{multline*}
\normalsize
Observe that the first sum over $i,j$ has only two non-zero contributions---$i=m$ and $j=n$, or $i=n$ and $j=m$---and the expectation is $\s{m}^2\s{n}^2$ in both cases. Similarly, the second sum can only contribute when $i=m$ and $j=n$. The result is
\begin{align}
{\rm E}\big(G_m G_n\big)
 &=\frac{1}{\nT^2\s{m}^2\s{n}^2}
    \left( 2\nT\dCi{m}\dCi{n}\s{m}^2\s{n}^2 + \nT(\nT-1)\dCi{m}\dCi{n}\s{m}^2\s{n}^2\right) \nonumber\\
 &= \dCi{m}\dCi{n} + \frac{1}{\nT}\dCi{m}\dCi{n}
 \label{eqn:covGmn}
\end{align}
Note that the first term on the right hand side is the $m,n$ component of $\gradC^T\gradC$. Subtracting this from both sides results in the $m\ne n$ case in \refeq{eqn:covG}

The computation for the diagonal term is slightly more complicated.
\begin{align*}
{\rm E}\big(G_m^2\big)
    &=\E{\frac{1}{\nT^2\s{m}^4}
        \sum_t\sum_{i} \dCi{i}\dtit{i}{t}\dtit{m}{t}
        \sum_{t'}\sum_{j} \dCi{j}\dtit{j}{t'}\dtit{m}{t'}}\\
    &=\frac{1}{\nT^2\s{m}^4}
        \sum_{t,t'}\sum_{i,j} \dCi{i}\dCi{j}
        \E{\dtit{i}{t}\dtit{m}{t}\dtit{j}{t'}\dtit{m}{t'}}
\end{align*}
Again, we split the sum over $t,t'$ into a diagonal and off-diagonal part
\small
\begin{multline*}
\sum_{t,t'}\sum_{i,j} \dCi{i}\dCi{j}
        \E{\dtit{i}{t}\dtit{m}{t}\dtit{j}{t'}\dtit{m}{t'}} = \\
\nT\sum_{i,j} \dCi{i}\dCi{j}
        \E{\dti{i}\dti{j}\dtin{m}{2}} +
\nT(\nT-1) \sum_{i,j} \dCi{i}\dCi{j}
        \E{\dti{i}\dti{m}}\E{\dti{j}\dti{m}}
\end{multline*}
\normalsize
As before, the second sum has a non-zero contribution only for $i=m$ and $j=m$ in which case the expectations result in $\s{m}^4$. The first sum has a second-moment contributions when $i=j=\mu \ne m$ and a single fourth-moment when $i=j=m$
\[
\sum_{i,j} \dCi{i}\dCi{j}
        \E{\dti{i}\dti{j}\dtin{m}{2}} = 
\sum_{\mu \ne m} \dCi{\mu}^2\s{\mu}^2\s{m}^2 +  \dCi{m}^2\a{m}^4
\]
Using these simplifications we have
\begin{align}
{\rm E}\big(G_m^2\big)
 &=\frac{1}{\nT^2\s{m}^4}\left( 
    \nT\left(\sum_{\mu \ne m} \left(\dCi{\mu}\right)^2\s{\mu}^2\s{m}^2 + \left(\dCi{m}\right)^2\a{m}^4\right) + 
    \nT(\nT-1)\left(\dCi{m}\right)^2\s{m}^4 \right) \nonumber\\
  &= \left(\dCi{m}\right)^2 
  + \frac{1}{\nT}\left(
      \left(\dCi{m}\right)^2\left(\frac{\a{m}^4}{\s{m}^4}-1\right)
    + \sum_{\mu \ne m} \left(\dCi{\mu}\right)^2\frac{\s{\mu}^2}{\s{m}^2}
  \right) \label{eqn:covGmm}
\end{align}
Again, subtracting $(\partial \Loss/\partial \theta_m)^2$ from both sides results in the diagonal case of \refeq{eqn:covG}.
\end{proof}

Note that the diagonal terms of the covariance matrix can be reorganized and summed resulting in the following corollary
\begin{corollary}\label{thm:normG} The norm of the MGD random variable $\|\Gmgd\|^2$ is a biased estimator of $\|\gradC\|^2$
\begin{equation}
\label{eqn:Gnorm}
\E{||\Gmgd||^2} = ||\gradC||^2 +
\frac{1}{\nT}\sum_{m=1}^{\nP}
\left(\left(\dCi{m}\right)^2\left(\frac{\a{m}^4}{\s{m}^4}-1\right)
    + \sum_{\mu \ne m} \left(\dCi{\mu}\right)^2\frac{\s{\mu}^2}{\s{m}^2}\right)
\end{equation}
\end{corollary}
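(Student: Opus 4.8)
The plan is to obtain the corollary directly from Theorem~\ref{thm:MGD} by observing that the expected squared norm of $\Gmgd$ is nothing more than the sum of its diagonal second moments. First I would write $\E{||\Gmgd||^2} = \sum_{m=1}^{\nP}\E{G_m^2}$, which is immediate since $||\Gmgd||^2 = \sum_m G_m^2$ and expectation is linear; note that no cross terms $G_m G_n$ with $m\ne n$ enter, so the off-diagonal part of the covariance plays no role here.

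Next I would substitute the per-component second moment $\E{G_m^2}$, which was already computed in the course of proving Theorem~\ref{thm:MGD}: equation~\eqref{eqn:covGmm} expresses $\E{G_m^2}$ as $\left(\dCi{m}\right)^2$ plus a $1/\nT$-weighted correction term. Equivalently, one can reach the same expression via the decomposition $\E{G_m^2} = \cov(\Gmgd)_{m,m} + \left(\E{G_m}\right)^2$, using the unbiasedness $\E{G_m} = \dCi{m}$ from \eqref{eqn:meanG} together with the diagonal entry of \eqref{eqn:covG}. Either route gives the same closed form for each $m$.

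Finally I would sum over $m$ from $1$ to $\nP$. The leading terms collect into $\sum_m \left(\dCi{m}\right)^2 = ||\gradC||^2$, and the remaining $1/\nT$-weighted terms reproduce verbatim the double sum appearing in \eqref{eqn:Gnorm}; the extra contribution $\E{||\Gmgd||^2} - ||\gradC||^2$ is exactly the (nonnegative) bias, and it vanishes as $\nT\to\infty$, consistent with $\cov(\Gmgd)\to 0$. Since every ingredient is already in hand, there is no genuine obstacle; the only care needed is bookkeeping---keeping the ``$\mu\ne m$'' inner sums attached when the outer sum over $m$ is carried out, and retaining the $-1$ in the factor $\a{m}^4/\s{m}^4 - 1$ rather than silently absorbing it.
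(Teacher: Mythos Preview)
Your proposal is correct and follows essentially the same approach as the paper: write $\E{||\Gmgd||^2}=\sum_m \E{G_m^2}$, use the decomposition $\E{G_m^2}=\cov(\Gmgd)_{m,m}+\big(\E{G_m}\big)^2$ together with unbiasedness \eqref{eqn:meanG} and the diagonal covariance from \eqref{eqn:covG} (equivalently, \eqref{eqn:covGmm}), and then sum over $m$. The paper's proof is just a terser version of exactly these steps.
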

\begin{proof}
As $\Gmgd$ is an unbiased estimator of $\gradC$ (see \eqref{eqn:meanG}), from the diagonal terms of \eqref{def:covG} we see that
\[
{\rm E}\big(G_m^2) = \left(\dCi{m}\right)^2 + \cov\big(\Gmgd\big)_{m,m}
\]
Summing over components $m$ gives \refeq{eqn:Gnorm}.
\end{proof}

Finally, assuming that $\dtit{m}{t}$ are independent, identically distributed Bernoulli random variables taking values $\pm \epsilon$ with probability $1/2$ simplifies these expressions. In this case, the variance and forth moments are
\[
\dtit{m}{t} \sim {\rm Bernoulli}\left(\pm \epsilon,\frac{1}{2}\right)
    \Longrightarrow \s{m}=\a{m}=\epsilon.
\]
Therefore, the first term in the sum \eqref{eqn:Gnorm} vanishes. Furthermore, as the ratio of variances in the inner sum becomes one, we evaluate this sum as 
\[
\sum_{\mu \ne m} \left(\dCi{\mu}\right)^2 = ||\gradC||^2 - \left(\dCi{m}\right)^2.
\]
Thus, for the Bernoulli perturbation model we have
\begin{equation}
\label{eqn:GnormBias}
    \E{||\Gmgd||^2} = \left(1 + \frac{\nP-1}{\nT}\right)||\gradC||^2
\end{equation}
It follows that the norm of the MGD-derived gradient vector is biased with respect to the true norm of the gradient, and furthermore this norm will be a relatively poor estimator unless $\nT\gg\nP$.

\section{MGD Pseudocode}
\label{sec:mgdCode}
\begin{table}[h]
\centering
\begin{tabular}{|c|l|}
\hline
Symbol & Description \\
\hline

$x$     & input to the network   \\
$y$ & target output from the network   \\
$\yhat$     & inferred output from the network   \\
$N$ & number of trainable parameters\\
$K$ & number of perturbed parameters\\
$\Theta$     & trainable network parameters   \\
$\dtheta$     & perturbations applied to $\Theta$ \\
$\delta = |\dti{k}|$ & amplitude of perturbations applied to $\Theta$ \\
$\tautheta$     & the period with which $\Theta$ is updated \\
$\eta$     & learning rate \\

$\Loss_0$     & baseline cost before perturbing   \\
$\Loss$     & final cost after perturbing   \\
$\Delta \Loss_l$     & difference in cost due to perturbation   \\
$\Gmgd$ & estimated gradient \\
$\Gamma=\delta^2\sqrt{K}\sqrt{1+(K-1)/\tau_\theta}$ & a normalization factor (see \refeq{eqn:GnormBias}). \\ 

$n$     & iteration step number   \\
$\textrm{num\_iterations}$     & total number of iterations   \\
$\textrm{num\_layers}$     & total number of layers   \\
$\textrm{weight\_perturbed}$    & Boolean; is the network trained using weight perturbation   \\
$\textrm{node\_perturbed}$    & Boolean; is the network trained using node perturbation   \\
\hline
\end{tabular}
\caption{Description of the variables used in the algorithm}
\label{symbols}
\end{table}

\begin{algorithm}
\caption{The weight and node-perturbed MGD algorithm.}
    \begin{algorithmic}[1]
          \State Initialize parameters $\theta$
          \For{$n$ \textbf{in} $\textrm{num\_iterations}$}            
              \State Input new training sample $x$, $y$
            \If{($n \bmod \tautheta$ = 0)}
              \State Set perturbations to zero $\theta \leftarrow 0$
              \If{$\textrm{node\_perturbed}$}
              \State Compute input to every layer $\xhat_l\leftarrow\yhat_{l-1}$ 
               \EndIf         

              \State Update baseline cost $\Loss_0 \leftarrow C(f(x;\Theta), y)$
              
            \EndIf
                \State Update perturbations $\theta$
            \For{$l$ \textbf{in} $\textrm{num\_layers}$}            

            \State Compute output 
            \State $\yhat_l \leftarrow f(x; \Theta+\theta_l)$
            \State Compute cost $\Loss_l \leftarrow C(\yhat_l ,y_l)$
            \State Compute change in cost $\Delta \Loss_l \leftarrow \Loss_l-\Loss_0$
            \State Compute error signal 

            \If{$\textrm{weight\_perturbed}$}
            \State $e_l \leftarrow \Delta \Loss_l \theta_l / \Gamma$
            \EndIf

            \If{$\textrm{node\_perturbed}$}
            \State $e_l \leftarrow \Delta \Loss_l \theta_l x_l / \Gamma$
            \EndIf
            \State
            \State Accumulate gradient approximation $G_l \leftarrow G_l + e_l$            
            \EndFor

            \If{($n \bmod \tautheta$ = 0)}
              \State Update parameters $\Theta \leftarrow \Theta - \eta G$
                \If{($n \bmod \tautheta \neq \infty$)}    
                  \State Reset gradient approximation $G \leftarrow 0$
                \EndIf
            \EndIf
          \EndFor
    \end{algorithmic}
    \label{mgdalgorithm}
\end{algorithm}

\newpage
\printbibliography

\end{document}